\begin{document}

\title{Local Rademacher Complexity for Multi-label Learning}

\author{\name Chang Xu \email xuchang@pku.edu.cn \\
 \addr  Key Laboratory of Machine Perception (Ministry of Education)\\
       School of Electronics Engineering and Computer Science\\
       Peking University\\
        Beijing 100871, China
      \AND
       \name Tongliang Liu \email tongliang.liu@student.uts.edu.au \\
       \addr  Centre for Quantum Computation and Intelligent Systems\\
       Faculty of Engineering and Information Technology\\
       University of Technology, Sydney\\
       Sydney, NSW 2007, Australia
       \AND
       \name Dacheng Tao \email dacheng.tao@uts.edu.au \\
       \addr  Centre for Quantum Computation and Intelligent Systems\\
       Faculty of Engineering and Information Technology\\
       University of Technology, Sydney\\
       Sydney, NSW 2007, Australia
       \AND
       \name Chao Xu \email xuchao@cis.pku.edu.cn\\
       \addr  Key Laboratory of Machine Perception (Ministry of Education)\\
       School of Electronics Engineering and Computer Science\\
       Peking University\\
        Beijing 100871, China
}


\maketitle

\begin{abstract}
We analyze the local Rademacher complexity of empirical risk minimization (ERM)-based multi-label learning algorithms, and in doing so propose a new algorithm for multi-label learning. Rather than using the trace norm to regularize the multi-label predictor, we instead  minimize the tail sum of the singular values of the predictor in multi-label learning. 
Benefiting from the use of the local Rademacher complexity, our algorithm, therefore,  has a sharper generalization error bound and a faster convergence rate.
Compared to methods that minimize over all singular values,  concentrating on the tail singular values  results in better recovery of the low-rank structure of the multi-label predictor, which plays an import role in exploiting label correlations. We propose a new conditional singular value thresholding algorithm to solve the resulting objective function. Empirical studies on real-world datasets validate our theoretical results and demonstrate the effectiveness of the proposed algorithm.
\end{abstract}

\begin{keywords}
  Local rademacher complexity, Multi-label Learning
\end{keywords}

\section{Introduction}

In multi-label learning, an example can be assigned  more than one label. This is different to the conventional single label learning, in which each example corresponds to one, and only one, label. Over the past few decades, multi-label learning \citep{zhang2013review} has been successfully applied to many real-world applications such as text categorization \citep{gao2004mfom}, image annotation \citep{wang2009multi}, and gene function analysis \citep{barutcuoglu2006hierarchical}.

A straightforward approach to multi-label learning is to decompose it into a series of binary classification problems for different labels\citep{tsoumakas2010mining}. However, this approach can result in poor performance  when strong label correlations exist. To improve prediction, a large number of algorithms have been developed that approach the multi-label learning problem from different perspectives such as the classifier chains algorithm \citep{read2011classifier}, the max-margin multi-label classifier \citep{hariharan2010large}, the probabilistic multi-label learning algorithms \citep{zhang2010multi,guo2013probabilistic}, the correlation learning algorithms \citep{huang2012multi,bi2014multilabel}, and label dependency removal  algorithms \citep{chen2012feature,tai2012multilabel}.


Vapnik's learning theory \citep{vapnik1998statistical} can be used to justify the successful development of multi-label learning \citep{zhi2013multi,li2013active,doppa2014hc}. 
One of the most useful data-dependent complexity measures is Rademacher complexity \citep{bartlett2003rademacher}, which leads to tighter generalization error bounds than those derived using the VC dimension and cover number. Recently, \citep{yu2013large} proved that the Rademacher complexity of empirical risk minimization (ERM)-based multi-label learning algorithms  can be bounded by the trace norm of the multi-label predictors, which provides a theoretical explanation for the effectiveness of using the  trace norm for regularization in multi-label learning.
On the other hand, minimizing the trace norm over the predictor  implicitly exploits the correlations between different labels in multi-label learning.

One shortcoming of the general Rademacher complexity is that it ignores the fact that the hypotheses selected by a learning algorithm usually 
belong to a more favorable subset of all the hypotheses, and they therefore have better performance than in the worst case.  To overcome this drawback, the local Rademacher complexity considers the  Rademacher averages of smaller subsets of the hypothesis set. This results in a sharper generalization error bound than that derived using global Rademacher complexity. Specifically, the generalization error bound derived by Rademacher complexity is at most of convergence order of $\mathcal{O}(\sqrt{1/n})$, while the bound obtained using local Rademacher complexity usually converges as fast as $\mathcal{O}(\log{n}/n)$. We therefore seek to use  local Rademacher complexity in multi-label learning problem and  design a new algorithm.

In this paper, we show that the local Rademacher complexities for ERM-based multi-label learning algorithms can be upper-bounded in terms  of the tail sum of the singular values of the multi-label predictor. As a result, we are motivated  to penalize the tail sum of  the singular values of the multi-label predictor in multi-label learning, rather than the sum of all the singular values (i.e., the trace norm). As well as  the advantage of producing a sharper generalization error bound, this new constraint over the multi-label predictor   achieves  better recovery of the low-rank structure of the predictor and  effectively exploits the correlations between labels in multi-label learning.
The resulting objective function can be efficiently solved using a newly proposed conditional singular value thresholding algorithm. Extensive experiments on real-world datasets validate our theoretical analysis and demonstrate the effectiveness of the new multi-label learning algorithm. 

\section{Global and Local Rademacher Complexities}


In a standard supervised learning setting, a set of training examples $z_{1}=(x_{1}, y_{1}), \cdots, z_{n}=(x_{n}, y_{n})$ are i.i.d. sampled from distribution $\mathcal{P}$ over $\mathcal{X}\times \mathcal{Y}$. Let $\mathcal{F}$ be a set of functions mapping  $\mathcal{X}$ to $\mathcal{Y}$. The learning problem is to select a function $f\in\mathcal{F}$ such that the expected loss $\mathbb{E}[\ell(f(x),y)]$ is small, where $\ell(\cdot): \mathcal{Y}\times\mathcal{Y}\rightarrow [0,1]$ is a loss function. Defining $\mathcal{G} = \ell(\mathcal{F},\cdot)$ as the loss class, the learning problem is then equivalent to findings a function $g\in\mathcal{G}$ with small $\mathbb{E}[g]$. 

Global Rademacher complexity \citep{bartlett2003rademacher} is an effective approach for measuring the richness (complexity) of the function class $\mathcal{G}$, and it is defined as
\begin{definition}
Let $\sigma_1, \cdots, \sigma_{n}$ be independent uniform ${-1, 1}$-valued random variables.  The global Rademacher complexity of $\mathcal{G}$ is then defined as
\begin{equation}
R_{n}(\mathcal{G}) = \mathbb{E}\left[ \sup_{g\in\mathcal{G}}\frac{1}{n} \sum_{i=1}^{n}\sigma_{i}g(z_{i}) \right].
\end{equation}
\end{definition}
Based on the notion of global Rademacher complexity, the algorithm has a standard generalization error bound, as shown in the following theorem \citep{bartlett2003rademacher} .
\begin{theorem}\label{the:global_bound}
Given $\delta>0$, suppose the function $\widehat{g}$ is learned over $n$ training points. Then, with probability at least $1-\delta$, we have
\begin{equation}
\mathbb{E}[\widehat{g}] \leq \inf_{g\in \mathcal{G}} \mathbb{E}[g] + 4R_{n}(\mathcal{G}) + \sqrt{\frac{2\log(2/\delta)}{n}}.
\end{equation}
\end{theorem}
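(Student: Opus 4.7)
The plan is to decompose the excess risk $\mathbb{E}[\widehat g]-\inf_{g\in\mathcal{G}}\mathbb{E}[g]$ into a uniform deviation term and a single-function deviation term, control the first via symmetrization plus McDiarmid, and control the second via Hoeffding. Because $\widehat g$ is chosen by the data, we cannot apply Hoeffding directly to it; the uniform-convergence step is what makes the whole argument go through.

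First I would define the one-sided deviation functional
\begin{equation*}
\Phi(z_1,\ldots,z_n) \;=\; \sup_{g\in\mathcal{G}} \Bigl(\mathbb{E}[g] - \tfrac{1}{n}\sum_{i=1}^{n} g(z_i)\Bigr).
\end{equation*}
Since $g$ takes values in $[0,1]$, changing a single $z_i$ changes $\Phi$ by at most $1/n$, so McDiarmid's bounded-differences inequality yields, with probability at least $1-\delta/2$,
\begin{equation*}
\Phi(z_1,\ldots,z_n) \;\leq\; \mathbb{E}[\Phi] + \sqrt{\tfrac{\log(2/\delta)}{2n}}.
\end{equation*}
Next I would bound $\mathbb{E}[\Phi]$ by the standard symmetrization trick: introduce a ghost sample $z_1',\ldots,z_n'$, rewrite $\mathbb{E}[g]$ as $\mathbb{E}[(1/n)\sum_i g(z_i')]$, pull the supremum inside the outer expectation, and then attach i.i.d.\ Rademacher signs $\sigma_i$ by symmetry of the pairs $g(z_i)-g(z_i')$. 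A triangle inequality then gives $\mathbb{E}[\Phi] \leq 2R_n(\mathcal{G})$.

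The same reasoning applied to the mirror-image functional $\sup_{g}\bigl((1/n)\sum_i g(z_i) - \mathbb{E}[g]\bigr)$ gives the companion bound with probability $1-\delta/2$. Taking a union bound produces, simultaneously for every $g\in\mathcal{G}$, two-sided control of size $2R_n(\mathcal{G})+\sqrt{\log(2/\delta)/(2n)}$. Let $g^{\star}\in\arg\inf_{g\in\mathcal{G}}\mathbb{E}[g]$. Using the ERM property $\tfrac{1}{n}\sum_i\widehat g(z_i) \leq \tfrac{1}{n}\sum_i g^{\star}(z_i)$ and chaining the two inequalities,
\begin{equation*}
\mathbb{E}[\widehat g] \;\leq\; \tfrac{1}{n}\sum_i\widehat g(z_i) + 2R_n(\mathcal{G}) + c \;\leq\; \tfrac{1}{n}\sum_i g^{\star}(z_i) + 2R_n(\mathcal{G}) + c \;\leq\; \mathbb{E}[g^{\star}] + 4R_n(\mathcal{G}) + 2c,
\end{equation*}
where $c=\sqrt{\log(2/\delta)/(2n)}$, and $2c=\sqrt{2\log(2/\delta)/n}$ recovers exactly the stated bound.

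The main obstacle is the symmetrization step: convincing oneself that replacing the expectation by a ghost sample and then inserting Rademacher signs preserves the distribution and pays only a factor of two. Everything else is bookkeeping (McDiarmid, union bound, ERM inequality). A secondary point to handle carefully is the constant $4$ in front of $R_n(\mathcal{G})$: it arises precisely because the uniform concentration is invoked on both sides of the ERM comparison rather than combined with a single-function Hoeffding bound for $g^{\star}$, which would give a smaller leading constant but a weaker control of $\tfrac{1}{n}\sum_i g^{\star}(z_i)-\mathbb{E}[g^{\star}]$.
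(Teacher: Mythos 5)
Your proof is correct: the paper states this theorem without proof, citing it as a standard result of Bartlett and Mendelson, and your McDiarmid-plus-symmetrization argument with the two-sided uniform bound and the ERM comparison is exactly the standard derivation, with the constants ($4R_n(\mathcal{G})$ and $2\sqrt{\log(2/\delta)/(2n)}=\sqrt{2\log(2/\delta)/n}$) worked out correctly. The only cosmetic point is that $\inf_{g\in\mathcal{G}}\mathbb{E}[g]$ need not be attained, so one should chain through an $\epsilon$-approximate minimizer and let $\epsilon\to 0$.
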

Since global Rademacher complexity $R_{n}(\mathcal{G})$ is in the order of $\mathcal{O}(\sqrt{1/n})$ for various classes used in practice, the generalization error bound in Theorem \ref{the:global_bound} converges at rate $\mathcal{O}(\sqrt{1/n})$.
Global Rademacher complexity is a global estimation of the complexity of the function class, and thus it ignores the fact that the algorithm is likely to pick functions with a small error, and, in particular, only a small subset of the function class will be used. 

Instead of using the global Rademacher averages of the entire class as the complexity measure, it is more reasonable to consider the Rademacher complexity of a small subset of the class, e.g.,  the intersection of the class with a ball centered on the function of interest. Clearly, this local Rademacher complexity \citep{bartlett2005local} is always smaller than the corresponding global Rademacher complexity, and its formal definition is given by
\begin{definition}
For any $r>0$, the local Rademacher complexity of $\mathcal{G}$ is defined as
\begin{equation}
R_{n}(\mathcal{G},r) = R_{n}(\{g\in\mathcal{G}: \mathbb{E}[g^{2}]\leq r\}).
\end{equation}
\end{definition}
The following theorem describes the generalization error bound based on local Rademacher complexity.
\begin{theorem}\label{the:local_bound}
Given $\delta>0$, suppose we learn the function $\widehat{g}$ over $n$ training points. Assume that there is some $r>0$ such that for every $g\in\mathcal{G}$, $\mathbb{E}[g^{2}]\leq r$. Then with probability at least $1-\delta$, we have
\begin{equation}
\mathbb{E}[\widehat{g}] \leq \inf_{g\in \mathcal{G}} \mathbb{E}[g] + 8R_{n}(\mathcal{G}) + \sqrt{\frac{8r\log(2/\delta)}{n}} + \frac{3\log(2/\delta)}{n}.
\end{equation}
\end{theorem}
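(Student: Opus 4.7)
The plan is to combine Talagrand's concentration inequality for the supremum of an empirical process with standard symmetrization, exploiting the uniform variance bound $\mathbb{E}[g^{2}] \le r$ in the two places where second-moment information is useful: the concentration of the empirical-process supremum, and the concentration of the empirical mean of a fixed near-infimizer. Because the variance bound is assumed uniformly over all of $\mathcal{G}$, no sub-root fixed-point argument (the usual heart of localization) is needed here; the localization is already ``baked in'' by the hypothesis, and the theorem reduces to a Bennett/Bernstein-flavored refinement of Theorem~\ref{the:global_bound}.

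First I would define the one-sided deviation $Z_{n} = \sup_{g\in\mathcal{G}}\bigl(\mathbb{E}[g] - \tfrac{1}{n}\sum_{i=1}^{n} g(z_{i})\bigr)$. Since each $g$ takes values in $[0,1]$ and $\mathrm{Var}(g(z)) \le \mathbb{E}[g^{2}] \le r$, Talagrand's inequality in Bousquet's form yields, with probability at least $1-\delta/2$,
\begin{equation*}
Z_{n} \;\le\; \mathbb{E}[Z_{n}] + \sqrt{\frac{2 r \log(2/\delta)}{n}} + \frac{c\,\log(2/\delta)}{n},
\end{equation*}
for a universal constant $c$. The classical symmetrization lemma then gives $\mathbb{E}[Z_{n}] \le 2 R_{n}(\mathcal{G})$, producing a high-probability upper bound on $Z_{n}$ of the form $2R_{n}(\mathcal{G}) + \sqrt{2r\log(2/\delta)/n} + c\log(2/\delta)/n$.

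Next, I would pick a near-infimizer $g^{\star}\in\mathcal{G}$ of $\mathbb{E}[g]$ and apply Bernstein's inequality to the single random variable $g^{\star}(z)$, using $\mathrm{Var}(g^{\star}(z)) \le r$, to obtain with probability $\ge 1-\delta/2$ an analogous bound on $\tfrac{1}{n}\sum_{i} g^{\star}(z_{i}) - \mathbb{E}[g^{\star}]$. Combining the two concentration events via a union bound, writing $\mathbb{E}[\widehat g] \le \tfrac{1}{n}\sum_{i}\widehat g(z_{i}) + Z_{n}$, invoking the ERM property $\tfrac{1}{n}\sum_{i}\widehat g(z_{i}) \le \tfrac{1}{n}\sum_{i} g^{\star}(z_{i})$ that is implicit in ``we learn the function $\widehat g$'', and finally letting $g^{\star}$ approach the infimum, delivers the stated inequality.

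The main obstacle will be constant bookkeeping rather than any conceptual step: extracting the precise coefficients $8$, $\sqrt{8}$ and $3$ requires a careful choice of the version of Talagrand's inequality (e.g.\ Bousquet versus Klein--Rio constants), a judicious split of the overall confidence $\delta$ between the empirical-process and Bernstein events, and the crude pooling bound $\sqrt{a+b} \le \sqrt{a}+\sqrt{b}$ to merge the second-moment contributions from the two concentration applications into the single $\sqrt{8r\log(2/\delta)/n}$ term. The heavier localization machinery of Bartlett--Bousquet--Mendelson would enter only if the variance bound $\mathbb{E}[g^{2}] \le r$ had itself to be derived from a Bernstein-type relation between $\mathbb{E}[g^{2}]$ and $\mathbb{E}[g]$ via a sub-root fixed point; under the present uniform hypothesis this step is simply skipped.
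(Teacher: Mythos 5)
The paper never proves Theorem \ref{the:local_bound}: it is stated as an imported result from the local Rademacher complexity literature (Bartlett--Bousquet--Mendelson style), so there is no internal proof to compare against; judged on its own, your route --- Talagrand/Bousquet concentration for $Z_{n}=\sup_{g\in\mathcal{G}}\bigl(\mathbb{E}[g]-\frac{1}{n}\sum_{i}g(z_{i})\bigr)$, symmetrization $\mathbb{E}[Z_{n}]\le 2R_{n}(\mathcal{G})$, Bernstein for a near-infimizer $g^{\star}$, the ERM inequality, and a union bound --- is exactly the standard argument behind this statement, and your remark that the uniform hypothesis $\mathbb{E}[g^{2}]\le r$ makes the sub-root fixed-point machinery unnecessary is correct. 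One inaccuracy to repair: Bousquet's form of Talagrand's inequality bounds $Z_{n}$ by $\mathbb{E}[Z_{n}]+\sqrt{2x(\sigma^{2}+2\mathbb{E}[Z_{n}])/n}+x/(3n)$, i.e.\ $\mathbb{E}[Z_{n}]$ also sits inside the square root, whereas your displayed inequality drops it. Peeling it off with $\sqrt{a+b}\le\sqrt{a}+\sqrt{b}$ and $2\sqrt{uv}\le u+v$ yields $Z_{n}\le 2\mathbb{E}[Z_{n}]+\sqrt{2rx/n}+\tfrac{4x}{3n}\le 4R_{n}(\mathcal{G})+\sqrt{2rx/n}+\tfrac{4x}{3n}$, and adding the Bernstein term $\sqrt{2rx/n}+\tfrac{x}{3n}$ with $x=\log(2/\delta)$ gives $4R_{n}(\mathcal{G})+\sqrt{8rx/n}+\tfrac{5x}{3n}$, which indeed implies the stated bound since $R_{n}(\mathcal{G})\ge 0$ and $5/3\le 3$; note, however, that there is no slack in the $\sqrt{r}$ term, so the proof only closes because both concentration steps contribute exactly $\sqrt{2rx/n}$, as you arranged. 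Finally, as you observe, the statement tacitly requires $\widehat g$ to be the empirical risk minimizer (the paper's ERM setting), and the step $\frac{1}{n}\sum_{i}\widehat g(z_{i})\le\frac{1}{n}\sum_{i}g^{\star}(z_{i})$ should be stated as an explicit hypothesis rather than read off from ``we learn the function $\widehat g$''.
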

By choosing a much smaller class $\mathcal{G}^{'}\subseteq \mathcal{G}$ with as small a variance as possible while requiring that $\widehat{g}$ still lies in $\mathcal{G}^{'}$, the generalization error bound in Theorem \ref{the:local_bound} has a faster convergence rate than Theorem \ref{the:global_bound} of up to $\mathcal{O}(\log{n}/n)$. Once the local Rademacher complexity is known, $\mathbb{E}[\widehat{g}] - \inf_{g\in \mathcal{G}} \mathbb{E}[g]$ can be bounded in terms of the fixed point of the local Rademacher complexity of $\mathcal{F}$.

\section{Local Rademacher Complexity  for Multi-label Learning}

In this section, we analyze the local Rademahcer complexity for multi-label learning and illustrate our motivation for developing a new multi-label learning algorithm.

The multi-label learning model is described by a distribution $\mathcal{Q}$ on the space of data points and labels $\mathcal{X}\times \{0,1\}^{L}$. We receive $N$ training points $\{(x_{i},y_{i})\}_{i=1}^{N}$ sampled i.i.d. from the distribution $\mathcal{Q}$, where $y_{i}\in \{0,1\}^{L}$ are the ground truth label vectors. Given these training data, we learn  a multi-label predictor $\widehat{W} \in \mathbb{R}^{d\times L}$ by performing ERM as follows:
\begin{equation}\label{eq:erm}
\widehat{W} = \arg \inf_{W\in \phi(W)} \widehat{\mathcal{L}}(W)=\frac{1}{n}\sum_{i=1}^{n} \ell(f(x_{i},W), y_{i}),
\end{equation}
where $\widehat{\mathcal{L}}(W)$ is the empirical risk of  a multi-label learner $W$, and $\phi(W)$ is some constraint on $W$.

\citep{yu2013large} proposes to solve the multi-label learning problem with Eq. (\ref{eq:erm}) by setting $\phi(W)$ as the trace constraint $trace(W)<\lambda$, and then providing its corresponding global Rademacher complexity bound
\begin{equation}
R_{n}(W) \leq \frac{\lambda}{\sqrt{n}}.
\end{equation}
This global Rademacher complexity for multi-label learning is in the order of $\mathcal{O}(\sqrt{1/n})$, which is exactly consistent with the general analysis shown in the previous section. Hence, the generalization error bound based on the global Rademacher complexity in \citep{yu2013large} converges up to  $\mathcal{O}(\sqrt{1/n})$.

In practice, the hypotheses selected by a learning algorithm usually have better performance than the worst case and belong to a more favorable subset of all the hypotheses. Based on this idea, we employ the local Rademacher complexities to measure the complexity of smaller subsets of the hypothesis set, which results in sharper learning bounds and guarantees faster convergence rates. The local Rademacher complexity of the multi-label learning algorithm using Eq. (\ref{eq:erm}) is shown in Theorem \ref{the:main}.

\begin{theorem}\label{the:main}
Suppose we learn $W$ over $n$ training points. Let $W = U\Sigma V$ be the SVD decomposition of $W$, where $U$ and $V$ are the unitary matrices, and $\Sigma$ is the diagonal matrix with singular values $\{\lambda_{i}\}$ in descending order. Assume $\|W\|\leq 1$ and there is some $r>0$ such that for every $W\in \mathcal{W}$, $\|\mathbb{E} [WW^{T}]\|\leq r$. Then, the local Rademacher complexity of $\mathcal{W}$ is
\begin{equation}\nonumber
\begin{split}
\mathbb{E}\left[\sup_{W\in\mathcal{W}}\langle\frac{1}{n}\sum_{i=1}^{n}\sigma_{i}x_{i}, W\rangle\right] \leq & r\sqrt{\frac{\theta}{n}} + \frac{\sum_{j>\theta}\lambda_{j}}{\sqrt{n}} \\
\end{split}
\end{equation}
\end{theorem}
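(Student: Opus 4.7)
The plan is to exploit the SVD structure of $W$ to split each predictor into a ``head'' piece of rank at most $\theta$ and a ``tail'' piece supported on the smaller singular values, and then to bound the Rademacher contributions of the two pieces separately using very different tools. Concretely, for any fixed threshold index $\theta$, I would write $W = W_{\le\theta} + W_{>\theta}$ with $W_{\le\theta}=\sum_{j\le\theta}\lambda_j u_j v_j^{\top}$ and $W_{>\theta}=\sum_{j>\theta}\lambda_j u_j v_j^{\top}$. By linearity of the inner product and sub-additivity of the supremum/expectation, it is enough to bound
\begin{equation}\nonumber
\mathbb{E}\!\left[\sup_{W\in\mathcal{W}}\Big\langle\tfrac{1}{n}\sum_i\sigma_i x_i,\ W_{\le\theta}\Big\rangle\right]
\quad\text{and}\quad
\mathbb{E}\!\left[\sup_{W\in\mathcal{W}}\Big\langle\tfrac{1}{n}\sum_i\sigma_i x_i,\ W_{>\theta}\Big\rangle\right]
\end{equation}
in isolation and add the results.

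For the head term, I would use that $W_{\le\theta}$ has rank at most $\theta$, so that the inner product factors through a $\theta$-dimensional subspace. Rewriting $\langle\cdot,W_{\le\theta}\rangle$ as a trace against the projection, applying the Cauchy--Schwarz inequality in Frobenius norm, and then using Jensen's inequality to move the expectation inside the square root, the problem reduces to controlling $\mathbb{E}\|\tfrac{1}{n}\sum_i\sigma_i x_i\|$ restricted to the relevant rank-$\theta$ subspace. Combining the Rademacher cancellation with the variance hypothesis $\|\mathbb{E}[WW^{\top}]\|\le r$ yields a contribution of order $r\sqrt{\theta/n}$.

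For the tail term I would bound the nuclear norm: $\|W_{>\theta}\|_{*}=\sum_{j>\theta}\lambda_j$. Since the dual of the nuclear norm is the spectral norm, the inner product is dominated by $\|W_{>\theta}\|_{*}\cdot\|\tfrac{1}{n}\sum_i\sigma_i x_i\|_{\mathrm{op}}$. Plugging in the standard global Rademacher bound for trace-norm-constrained predictors from Yu et al.\ (2013) — which yields a $1/\sqrt{n}$ rate — immediately gives the second contribution $\sum_{j>\theta}\lambda_j/\sqrt{n}$. Summing the head and tail estimates produces the claimed inequality.

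I expect the main obstacle to be the head-term estimate: the naive rank-$\theta$ bound would give only $\sqrt{\theta/n}$ (up to constants), and extracting the additional factor $r$ requires carefully routing the variance condition $\|\mathbb{E}[WW^{\top}]\|\le r$ through the symmetrization and spectral decomposition, in the spirit of the Bartlett--Bousquet--Mendelson eigenvalue-based local complexity argument. Once this piece is handled, the tail term is essentially a reuse of the known global trace-norm bound, and the rest is algebraic assembly.
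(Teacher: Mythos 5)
Your overall architecture (split $W$ at the index $\theta$ along its SVD, treat the head and tail separately, handle the tail by nuclear/spectral duality) matches the paper's proof in spirit, and your tail estimate is acceptable: $\langle X_{\sigma},W_{>\theta}\rangle\le\|W_{>\theta}\|_{*}\,\|X_{\sigma}\|_{\mathrm{op}}$ together with $\mathbb{E}\|X_{\sigma}\|_{\mathrm{op}}\lesssim 1/\sqrt{n}$ (which needs a boundedness/second-moment assumption on $x$ that the paper also uses implicitly) gives $\sum_{j>\theta}\lambda_{j}/\sqrt{n}$; the paper instead bounds the tail per direction and gets the slightly tighter $\sqrt{\sum_{j>\theta}\lambda_{j}^{2}/n}$ before relaxing it to the stated form. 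The genuine gap is the head term, and it is not a technicality: it is exactly the step where the local (variance) constraint enters, i.e., the entire content of the theorem. The route you describe --- write the head inner product against the rank-$\theta$ projector, apply Cauchy--Schwarz in Frobenius norm, then Jensen --- does not produce $r\sqrt{\theta/n}$. Using $\|\mathbb{E}[WW^{T}]\|\le r$ in that scheme you get at best $\|W_{\le\theta}\|_{F}\le\sqrt{\theta r}$ and $\mathbb{E}\|P_{\theta}X_{\sigma}\|\le\sqrt{\theta/n}$, hence a contribution of order $\theta\sqrt{r/n}$ (or $\theta/\sqrt{n}$ if you only invoke $\|W\|\le 1$), which is strictly weaker than $r\sqrt{\theta/n}$ in the relevant regime $r\le\theta$. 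You flag this yourself as ``the main obstacle'' and defer it to the spirit of Bartlett--Bousquet--Mendelson, but that deferred step is precisely what a proof must supply, so as written the proposal does not establish the claimed bound.

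What the paper does to close this gap is a \emph{weighted} pairing rather than a plain projection: the head part of $\langle X_{\sigma},W\rangle$ is written as an inner product between $\sum_{j\le\theta}\lambda_{j}^{-1}\langle X_{\sigma},u_{j}\rangle\,(\cdot)$ and $\sum_{j\le\theta}\lambda_{j}^{2}u_{j}v_{j}^{T}$, i.e., the $j$-th head direction is rescaled by $\lambda_{j}^{-1}$ on the empirical-process side and by $\lambda_{j}^{2}$ on the predictor side. Then the predictor-side factor is bounded in operator norm by $\|\sum_{j\le\theta}\lambda_{j}^{2}u_{j}u_{j}^{T}\|\le\|\mathbb{E}[WW^{T}]\|\le r$, which is how $r$ enters once (not once per direction), while the empirical-process factor has expectation at most $\sqrt{\sum_{j\le\theta}\lambda_{j}^{-2}\,\mathbb{E}\langle x,u_{j}\rangle^{2}/n}=\sqrt{\theta/n}$, using the (implicit) normalization $\mathbb{E}\langle x,u_{j}\rangle^{2}\le\lambda_{j}^{2}$ that ties the data's second moments in the directions $u_{j}$ to the singular values --- the same eigenvalue-comparison device as in Mendelson's kernel-class local complexity bound. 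To repair your write-up you would have to make this reweighting, and the accompanying moment assumption linking $\mathbb{E}\langle x,u_{j}\rangle^{2}$ to $\lambda_{j}^{2}$, explicit; without it the factor $r$ cannot be extracted and only the global-type rate $\sqrt{\theta/n}$ (times a dimension-like factor) is reachable.
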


\begin{proof}

Considering $W = U\Sigma V$, $W$ can be rewritten as
\begin{equation}
W = \sum_{j}u_{j}v_{j}^{T}\lambda_{j}, 
\end{equation}
where $u_j$ and $v_j$ are the column vectors of $U$ and $V$, respectively. Based on the orthogonality of $U$ and $V$, we have the following decomposition 
\begin{equation}\nonumber
\begin{split}
\langle\frac{1}{n}&\sum_{i=1}^{n}\sigma_{i}x_{i}, W\rangle =  \langle X_{\sigma}, W\rangle \\
\leq  & \langle \sum_{j=1}^{\theta} X_{\sigma}u_{j}u_{j}^{T}\lambda_{j}^{-1}, \sum_{j=1}^{\theta}u_{j}v_{j}^{T}\lambda_{j}^{2}\rangle + \langle \sum_{j>\theta} X_{\sigma}u_{j}u_{j}^{T}, W \rangle \\
\leq & \| \sum_{j=1}^{\theta} X_{\sigma}u_{j}u_{j}^{T}\lambda_{j}^{-1}\|\|\sum_{j=1}^{\theta}u_{j}v_{j}^{T}\lambda_{j}^{2}\|+\|\sum_{j>\theta} X_{\sigma}u_{j}u_{j}^{T}\|\|W\|.
\end{split}
\end{equation}
Considering
\begin{equation}
\begin{split}
\mathbb{E} &[\| \sum_{j=1}^{\theta} X_{\sigma}u_{j}u_{j}^{T}\lambda_{j}^{-1}\|] = \mathbb{E}  \left[\sqrt{\sum_{j=1}^{\theta}\lambda_{j}^{-2}\langle X_{\sigma}, u_{j} \rangle^{2}}\right] \\
\leq & \sqrt{\sum_{j=1}^{\theta}  \frac{\lambda_{j}^{-2}}{n}\mathbb{E}[\langle x, u_{j} \rangle^{2}]} = \sqrt{\frac{\theta}{n}},
\end{split}
\end{equation}
and
\begin{equation}
\begin{split}
\|\sum_{j=1}^{\theta}u_{j}v_{j}^{T}\lambda_{j}^{2}\| = &\|\sum_{j=1}^{\theta}u_{j}u_{j}^{T}\lambda_{j}^{2}\| \\
\leq & \|\sum_{j=1}^{\infty}u_{j}u_{j}^{T}\lambda_{j}^{2}\|  = \|\mathbb{E}[WW^{T}]\|  \leq r,
\end{split}
\end{equation}
we have
\begin{equation}
\begin{split}
\mathbb{E}\left[\sup_{W\in\mathcal{W}}\langle\frac{1}{n}\sum_{i=1}^{n}\sigma_{i}x_{i}, W\rangle\right] \leq & r\sqrt{\frac{\theta}{n}} + \sqrt{\frac{1}{n}\sum_{j>\theta}\lambda_{j}^{2}}\\
\leq & r\sqrt{\frac{\theta}{n}} + \frac{\sum_{j>\theta}\lambda_{j}}{\sqrt{n}},
\end{split}
\end{equation}
which completes the proof.
\end{proof}

According to Theorem \ref{the:main}, the local Rademacher complexity for ERM-based multi-label learning algorithms is determined by the tail sum of the  singular values. When $\sum_{j>\theta} \lambda_{j}= \mathcal{O}(\exp(-\theta))$, we have $\mathbb{E}[\widehat{g}] - \inf_{g\in \mathcal{G}} \mathbb{E}[g]  = \mathcal{O}(\log{n}/n)$, which leads to a sharper generalization error bound  than that based on global Rademacher complexity.

\section{Algorithm}

In this section,  the properties of the local Rademacher complexity discussed above are used to devise a new multi-label learning algorithm.

Each training point has a feature vector $x_{i}\in \mathbb{R}^{d}$ and a corresponding label vector $y_{i} = \{0,1\}^{L}$. If $y_{ij}=1$, example $x_i$ will have label-$j$; otherwise, there is no label-$j$ for example $x_{i}$. The multi-label predictor is parameterized as $f(x,W) = W^{T}x$, where $W\in \mathbb{R}^{d\times L}$. $\ell(y, f(x,W))\in \mathbb{R}$ is the loss function that computes the discrepancy between the true label vector and the predicted label vector. 

The trace norm is an effective approach for modeling and capturing  correlations between labels associated with examples, and it has been widely adopted in many multi-label algorithms \citep{amit2007uncovering,loeff2008scene,cabral2011matrix,xu2013speedup}. 
Within the ERM framework,  their objective functions usually take the form
\begin{equation}\label{eq:meta}
\min_{W} \sum_{i=1}^{n} \ell(y_{i},f(x_{i},W))+ C\|W\|_{*},
\end{equation} 
where $\|\cdot\|_{*}$ is the trace norm and $C$ is a constant. In particular, for Problem (\ref{eq:meta}), \citep{yu2013large} has proved that the global Rademacher complexity of $W$  is upper-bounded in terms of its trace.

As shown  in the previous section, however, the \emph{tail sum of the singular values} of $W$, rather than its trace, determines the local Rademacher complexity. Since the local Rademacher complexity can lead to tighter generalization bounds than those of the global Rademacher complexity, this motivates us to consider the following objective function to solve the multi-label learning problem.
\begin{equation}
\min_{W} \sum_{i=1}^{n} \ell(y_{i},f(x_{i},W))+ C\sum_{j>\theta}\lambda_{j}(W),
\end{equation} 
where $\lambda_{j}(W)$ is the $j$-th largest singular value of $W$, and $\theta$ is a parameter to control the tail sum. If we use the squared L2-loss function, we get
\begin{equation}\label{eq:prime}
\min_{W} \sum_{i=1}^{n} \|y_{i}-W^{T}x_{i}\|^{2}+ C\sum_{j>\theta}\lambda_{j}(W).
\end{equation} 

In multi-label learning, the multi-label predictor $W$ usually has a low-rank structure due to the correlations between multiple labels.
The trace norm is regarded as an effective surrogate of rank minimization by simultaneously penalizing all the singular values of $W$. However, it may incorrectly keep the small singular values, which should be zero, or shrink the large singular values to zeros, which should be non-zero. In contrast, our new algorithm can directly minimize over the small singular values, which  encourages the  low-rank structure. 

\begin{figure*}[!htbp]
\begin{center}
   \includegraphics[width=\textwidth]{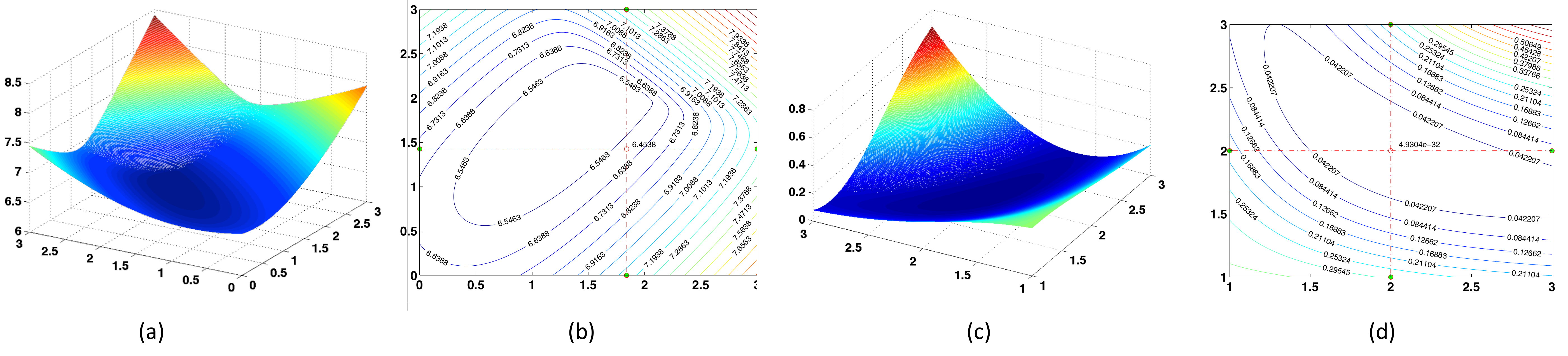}
\end{center}\vskip -0.2in
   \caption{Comparison of the trace norm and the proposed norm as functions of unknown entries $Z_{2,3}$ and $Z_{3,4}$ for the matrix in Eq. (\ref{eq:mat}). (a) and (b) are the 3-D function plot and the contour lines for the trace norm, while (c) and (d) are the same for the proposed norm.}\vspace{-5mm}
\label{fig:mot}
\end{figure*}

To understand why the trace norm may fail in rank minimization, we consider a matrix
\begin{equation}\label{eq:mat}
M = \left[
\begin{array} {lccr}
     2 & 1 & 2 & 1  \\
     1 & 1 & ? & 2  \\
     1 & 1 & 2 & ?
\end{array}
\right],
\end{equation}
where $M_{2,3}$ and $M_{3,4}$ are unknown. The results shown in Figure \ref{fig:mot} plot the trace norm and the proposed new norm of $M$ for all possible completions in a range around the value that minimizes its rank $M_{2,3}=2$ and $M_{3,4}=2$. We find that the trace norm yields the optimal solution for $M$ with singular values $\lambda = [ 5.1235, 1.0338, 0.2965]$ when $M_{2,3} = 1.8377$ and $M_{3,4}=1.4248$. In contrast, we propose to constrain over the tail singular values (setting $\theta = 2$), and derive the optimal solution with singular values $\lambda = [5.3549, 1.1512, 0]$ when $M_{2,3} = 2$ and $M_{3,4}=2$. Hence, the new norm can successfully discover the low-rank structure, while the trace norm fails in this case.

\subsection{Optimization}

Starting with Eq. (\ref{eq:prime}) without the norm regularization, we get the following problem,
\begin{equation}
\min_{W} f(W) = \|Y-W^{T}X\|^{2}_{F},
\end{equation}
where $X$ is the data matrix and $Y$ is the label matrix. The gradient method is a natural approach for solving this problem and generates a sequence of approximate solutions:
\begin{equation}
W_{k} = W_{k-1} - \frac{1}{t_{k}} \nabla f(W_{k-1}),
\end{equation}
which can be further reformulated as a proximal regularization of the linearized function $f(W)$ at $W_{k-1}$ as
\begin{equation}\label{eq:f}
\begin{split}
W_{k} = \min \; &f(W_{k-1}) + \langle W-W_{k-1}, \nabla f(W_{k-1})\rangle \\
    + &\frac{t_{k}}{2}\|W-W_{k-1}\|_{F}^{2}.
\end{split}
\end{equation}
Since Eq. (\ref{eq:f}) can be regarded as a linear approximation of the function $f$ at point $W_{k-1}$ regularized by a quadratic proximal term, Problem (\ref{eq:prime}) can be solved in the following iterative step:
{\scriptsize
\begin{equation} \label{eq:prime_tran}
W_{k} = \min_{W} \frac{t_{k}}{2} \|W-(W_{k-1}-\frac{1}{t_{k}}\nabla f(W_{k-1}))\|_{F}^{2} + C\sum_{j>\theta}\lambda_{j}(W),
\end{equation}
}\noindent
where the terms in Eq. (\ref{eq:f}) that do not depend on $W$ are ignored. 

Recall that if  problem (\ref{eq:prime_tran}) is constrained with the trace norm, \citep{cai2010singular} showed that it can be efficiently solved using the singular value thresholding algorithm. Hence, we propose a  new conditional singular value thresholding algorithm to handle the newly proposed norm regularization.  The solution is summarized in the following theorem.

\begin{theorem}\label{the:opt}
Let $Q\in\mathbb{R}^{m\times n}$ and its SVD decomposition is $Q = U\Sigma V^{T}$, where $U\in \mathbb{R}^{m\times r}$ and $V\in\mathbb{R}^{n\times r}$ have orthonormal columns, $\Sigma\in\mathbb{R}^{r\times r}$ is diagonal. Then,
\begin{equation}
\mathcal{D}^{\theta}(Q) = \arg \min_{W} \{\frac{1}{2} \|W-Q\|_{F}^{2} + C\sum_{j>\theta}\lambda_{j}(W)\}
\end{equation}
is given by $\mathcal{D}^{\theta}(Q) = U\Sigma^{\theta}V^{T}$, where $\Sigma^{\theta}$ is diagonal with $(\Sigma^{\theta})_{ii}= (i\leq \theta \; \& \; \Sigma_{ii}>C) \; ?  \; \Sigma_{ii} : \max(0,\Sigma_{ii}-C)$.
\end{theorem}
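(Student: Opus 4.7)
The plan is to reduce the matrix optimization to a separable scalar problem over the singular values of $W$ by a von Neumann-type alignment argument, and then solve the resulting one-dimensional subproblems in closed form.

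First I would argue that an optimal $W$ shares left and right singular vectors with $Q$. Writing $\tfrac{1}{2}\|W-Q\|_F^2 = \tfrac{1}{2}\|W\|_F^2+\tfrac{1}{2}\|Q\|_F^2-\langle W,Q\rangle$ and applying von Neumann's trace inequality $\langle W,Q\rangle \leq \sum_i \lambda_i(W)\,\Sigma_{ii}$, one sees that among all $W$ with a prescribed singular-value spectrum, the Frobenius discrepancy is minimized by taking $W = U\,\mathrm{diag}(\lambda(W))\,V^T$ with the same unitary factors $U,V$ as in the SVD of $Q$ and matching descending order. Since the penalty $C\sum_{j>\theta}\lambda_j(W)$ is spectrum-only, alignment is costless there. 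Hence the search reduces to $W = U\Sigma_W V^T$ with $\Sigma_W$ diagonal and non-increasing.

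Second, setting $\sigma_i = (\Sigma_W)_{ii}$, the problem collapses to
\begin{equation*}
\min_{\sigma_1\geq\sigma_2\geq\cdots\geq\sigma_r\geq 0}\;\sum_{i=1}^{r}\tfrac{1}{2}(\sigma_i-\Sigma_{ii})^2 + C\sum_{j>\theta}\sigma_j,
\end{equation*}
which decouples coordinate-wise once the ordering constraint is temporarily dropped. For $i\leq\theta$ there is no penalty, yielding $\sigma_i = \Sigma_{ii}$; for $i>\theta$, the scalar problem $\min_{\sigma\geq 0}\tfrac{1}{2}(\sigma-\Sigma_{ii})^2+C\sigma$ is classical soft-thresholding, yielding $\sigma_i=\max(0,\Sigma_{ii}-C)$. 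I would then verify that this concatenated solution respects the ordering: monotonicity is immediate within each block (since $\Sigma$ is ordered and $x\mapsto\max(0,x-C)$ is non-decreasing), and at the junction $\sigma_\theta = \Sigma_{\theta\theta}\geq \Sigma_{\theta+1,\theta+1}\geq \max(0,\Sigma_{\theta+1,\theta+1}-C) = \sigma_{\theta+1}$. Reassembling gives $\mathcal{D}^\theta(Q) = U\Sigma^\theta V^T$ with the claimed diagonal.

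The main obstacle is the alignment step: the equality case of von Neumann's inequality is delicate when $Q$ has repeated or vanishing singular values, because the SVD is then non-unique and one must coordinate the rotations of the left and right singular bases. I would handle this by fixing one SVD of $Q$ and showing that for any candidate $W$ with a given spectrum, a suitable simultaneous rotation of its singular vectors into alignment with $Q$ weakly decreases $\tfrac{1}{2}\|W-Q\|_F^2$ while leaving the penalty untouched, so that the reduction to the scalar problem is legitimate. Once alignment is established, Step 2 is routine scalar calculus and Step 3 is a short monotonicity check.
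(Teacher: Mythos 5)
Your reduction is sound and it is a genuinely different route from the paper's: the paper guesses a candidate and verifies a first-order condition ($0$ in a claimed subdifferential of the tail sum), whereas you prove global optimality directly. Your route is in fact on firmer ground, because $\sum_{j>\theta}\lambda_j(W)=\|W\|_*-\sum_{j\le\theta}\lambda_j(W)$ is a difference of convex functions, not convex, so a stationarity check alone does not certify a global minimizer; the von Neumann comparison does. Also, the ``main obstacle'' you worry about is not one: you never need the equality case of von Neumann or any rotation of $W$. For an arbitrary $W$ with sorted singular values $\sigma$, von Neumann gives $\tfrac12\|W-Q\|_F^2\ge\tfrac12\sum_i(\sigma_i-\Sigma_{ii})^2$ (pad $\Sigma_{ii}=0$ for $i>r$), and the right-hand side is attained by the fixed aligned surrogate $U\,\mathrm{diag}(\sigma)\,V^T$, which has the same penalty; that one-sided comparison is all the reduction requires, repeated singular values notwithstanding.

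The genuine problem is the last sentence. What your argument produces is $\sigma_i=\Sigma_{ii}$ for \emph{every} $i\le\theta$ and $\sigma_i=\max(0,\Sigma_{ii}-C)$ for $i>\theta$, with no condition $\Sigma_{ii}>C$ on the head block. The theorem's formula instead thresholds a head singular value whenever $\Sigma_{ii}\le C$, i.e.\ it sets $(\Sigma^{\theta})_{ii}=\max(0,\Sigma_{ii}-C)=0$ in that case. The two prescriptions disagree whenever $0<\Sigma_{ii}\le C$ for some $i\le\theta$, and there your solution is strictly better: keeping such a value costs nothing in either term (it stays among the top $\theta$, so it is unpenalized, and the quadratic term vanishes), while zeroing it adds $\tfrac12\Sigma_{ii}^2$; a $1\times1$ instance with $\theta=1$ and $0<\Sigma_{11}\le C$ already shows the theorem's formula is not the minimizer. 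So your proof is correct as a derivation of the closed-form solution, but it does not establish the statement as written --- the conditional $(i\le\theta\;\&\;\Sigma_{ii}>C)$ coincides with what you prove only when all retained head singular values exceed $C$. You should say this explicitly rather than claim you recover ``the claimed diagonal''; as it stands, the claim in your final step is false in that regime, and the discrepancy lies in the theorem's formula, not in your optimization.
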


\begin{proof}
Assuming that $\widehat{W}$ is the optimal solution, $\mathbf{0}$ should be  a subgradient of the objective function at the point $\widehat{W}$,
\begin{equation}
\mathbf{0} \in \widehat{W} - Q + C \partial(\sum_{j>\theta}\lambda_{j}(\widehat{W})),
\end{equation}
where $\partial(\sum_{j>\theta}\lambda_{j}(\widehat{W}))$ is the set of subgradients of the new norm regularization. Letting $W= U\Sigma V^{T}$, we have
\begin{equation}
\begin{split}
\partial & (\sum_{j>\theta}\lambda_{j}(\widehat{W})) = UI_{\theta}V^{T} + S \\
&s.t. \; U^{T}S = 0, SV = 0, \|S\|\leq 1,
\end{split}
\end{equation}
where $I_{\theta}$ is obtained by setting the  diagonal values  with indices greater than $\theta$ in the identity matrix $I$ as zeros. Set the SVD of $Q$ as
\begin{equation}
Q = U_{0}\Sigma_{0}V_{0}^{T} + U_{1}\Sigma_{1}V_{1}^{T},
\end{equation}
where $U_{0}, V_{0}$ are the singular vectors associated with singular values greater than $C$, while $U_{1}, V_{1}$ correspond to those smaller than or equal to $C$. With these definitions, we have
\begin{equation}
\widehat{W} = U_{0}[\Sigma_{0}-CI_{\theta}]V_{0}^{T},
\end{equation}
and thus,
\begin{equation}
\begin{split}
Q - \widehat{W} = & U_{0}\Sigma_{0}V_{0}^{T} + U_{1}\Sigma_{1}V_{1}^{T} - U_{0}[\Sigma_{0}-CI_{\theta}]V_{0}^{T} \\
& = C (U_{0}I_{\theta}V_{0}^{T} + C^{-1}U_{1}\Sigma_{1}V_{1}^{T}) \\
& = C \partial(\sum_{j>\theta}\lambda_{j}(\widehat{W})),
\end{split} 
\end{equation}
where $S$ is defined as $C^{-1}U_{1}\Sigma_{1}V_{1}^{T}$.
The proof is completed.
\end{proof}
It turns out that the minimization of Problem (\ref{eq:prime_tran}) can be solved by first computing the SVD of $(W_{k-1}-\frac{1}{t_{k}}\nabla f(W_{k-1}))$, and then applying the conditional thresholding on the singular values. By exploiting the structure of the newly proposed norm regularization, the convergence rate of the the resulting algorithm is expected to be the same as that of the gradient method. The whole optimization procedure is shown in Algorithm 1.

\begin{algorithm}[tb]

   \caption{Local Rademacher complexity multi-label learning via  conditional singular value thresholding}
   \label{alg:gbrt}
\begin{algorithmic}
   \STATE {\bfseries Input:} $X, Y, W_{0}, C, \theta, t_{0}, \gamma>1$
   \FOR {$k=1, \cdots, K$}
   
    \STATE $t_{k} = \gamma t_{k-1}$
    \STATE Compute the SVD of $(W_{k-1}-\frac{1}{t_{k}}\nabla f(W_{k-1}))$
    \STATE Update $W_{k}$ by Theorem \ref{the:opt}
    \ENDFOR
   \RETURN $W$
\end{algorithmic}
\end{algorithm}\vskip -0.3in

\section{Experiments}

In this section, we evaluate our proposed algorithm on four datasets from diverse applications: \emph{bibtex}  and \emph{delicious} for tag recommendation,  \emph{yeast} for gene function prediction, and  \emph{corel5k} for image classification. All these datasets were obtained from Mulan's website \footnote{http://mulan.sourceforge.net/datasets.html} and were pre-separated into training and test sets. 
Detailed information about these datasets is shown in Table \ref{tab:dataset}.

\begin{table*}[!htbp]
\caption{Characteristics of the experimental datasets.}
\label{tab:dataset}
\vskip +0.05in
\begin{center}
{\small
\begin{tabular}{lcccccccc}
\hline
  Dataset        & domain &   \# instances & \# attributes & \# labels & cardinality  & density & \# distinct\\
\hline\hline
bibtex            & text       & 7395          & 1836          &  159      & 2.402          & 0.015  & 2856   \\
delicious        & web      & 16105        & 500            &   983     &  19.020       & 0.019  &  15806 \\
yeast              & biology & 2417          &  103           &   14        &  4.237         &  0.303 &  198      \\
corel5k          & images  & 5000          & 499            &   374     &  3.522         & 0.009   & 3175     \\
\hline
\end{tabular}
}
\end{center}
\end{table*}

The details of the competing methods are summarized as follows:
\begin{enumerate}
 \item LRML (\textbf{L}ocal \textbf{R}ademacher complexity \textbf{M}ulti-label \textbf{L}earning): our proposed method. The optimal value of the regularization parameter $C$ was chosen on a validation set.
 \item ML-trace: solves the multi-label learning problem based on Eq. (\ref{eq:erm}) with the squared loss and trace norm.
 \item ML-Fro:  solves the multi-label learning problem based on Eq. (\ref{eq:erm}) with the squared loss and Frobenius  norm.
 \item LEML: the method proposed in \citep{yu2013large}, which decomposes the trace norm into the Frobenius norms of two low-rank matrices.
 \item CPLST: the method proposed in \citep{chen2012feature}, which is equivalent to Problem  \ref{eq:erm} with the low-rank constraint.
\end{enumerate}
Given a test set, we used four criteria  to evaluate the performance of the multi-label predictor:
\begin{itemize}
 \setlength{\itemsep}{1pt}
 \setlength{\parskip}{0pt}
 \setlength{\parsep}{0pt}
  \item Average precision:  evaluates the average fraction of relevant labels ranked higher than a particular label.
 \item Top-K accuracy: for each example,  $K$ labels with the largest decision values for prediction were selected. The average accuracy of all the examples is reported.
 \item Hamming loss: measures the overall classification error through $\frac{1}{nL}\sum_{i}^{n}\sum_{j=1}^{L}\mathcal{I}[round(f^{j}(x_i)) \neq y_{ij}]$.
 \item Average AUC:  the area under the ROC curve for each example was measured, and the average AUC of all the test examples is reported.
\end{itemize}

\begin{table}[!htbp]
\caption{Comparison of LRML with other low-rank aimed algorithms in terms of the different evaluation criteria. $\bullet(\circ)$ indicates that LRML is significantly better (worse) than the corresponding method.}
\label{tab:total}
\vskip -0.2in
\begin{center}
\begin{tabular}{lllllll}
\hline
                                & \multicolumn{6}{c}{Top-1 Accuracy} \\
                                &  \multicolumn{3}{c}{bibtex} & \multicolumn{3}{c}{delicious} \\
  $\theta (k)/L $     & LRML    &   LEML     & CPLST   & LRML    &   LEML     & CPLST  \\
  \hline
$20\%$                   & 0.6002 &   0.5833 $\bullet$ & 0.5555 $\bullet$ & 0.6713 &   0.6716 $\circ$     & 0.6653 $\bullet$ \\
$40\%$                   & 0.6123 &   0.6099 $\bullet$ & 0.5463 $\bullet$ & 0.6708 &   0.6666 $\bullet$ & 0.6625 $\bullet$ \\
$60\%$                   & 0.6330 &   0.6199 $\bullet$ & 0.5753 $\bullet$ & 0.6701 &   0.6628 $\bullet$ & 0.6622 $\bullet$ \\
$80\%$                   & 0.6412 &   0.6394                  & 0.5976 $\bullet$ & 0.6688 &   0.6625 $\bullet$ & 0.6622 $\bullet$ \\
\hline
\hline
                                & \multicolumn{6}{c}{Top-3 Accuracy} \\
                                &  \multicolumn{3}{c}{bibtex} & \multicolumn{3}{c}{delicious} \\
  $\theta (k)/L $     & LRML    &   LEML     & CPLST   & LRML    &   LEML     & CPLST  \\
  \hline
$20\%$                   & 0.3520 &   0.3416 $\bullet$ & 0.3199 $\bullet$ & 0.6297 &   0.6120 $\bullet$ & 0.6113 $\bullet$ \\
$40\%$                   & 0.3713 &   0.3653 $\bullet$ & 0.3453 $\bullet$ & 0.6288 &   0.6123 $\bullet$ & 0.6108 $\bullet$\\
$60\%$                   & 0.3892 &   0.3800                  & 0.3601 $\bullet$ & 0.6300 &   0.6115 $\bullet$ & 0.6109 $\bullet$\\
$80\%$                   & 0.3912 &   0.3858 $\bullet$ & 0.3675 $\bullet$ & 0.6290 &   0.6113 $\bullet$ & 0.6109 $\bullet$\\
\hline
\hline
                                & \multicolumn{6}{c}{Top-5 Accuracy} \\
                                &  \multicolumn{3}{c}{bibtex} & \multicolumn{3}{c}{delicious} \\
  $\theta (k)/L $     & LRML    &   LEML     & CPLST   & LRML    &   LEML     & CPLST  \\
  \hline
$20\%$                   & 0.2511 &   0.2449 $\bullet$& 0.2311 $\bullet$& 0.5690 &   0.5646                 & 0.5630 $\bullet$\\
$40\%$                   & 0.2718 &   0.2684 $\bullet$& 0.2496 $\bullet$& 0.5688 &   0.5639                 & 0.5628 $\bullet$\\
$60\%$                   & 0.2812 &   0.2766 $\bullet$& 0.2607 $\bullet$& 0.5701 &   0.5628 $\bullet$& 0.5623 $\bullet$\\
$80\%$                   & 0.2856 &   0.2820                 & 0.2647 $\bullet$& 0.5703 &   0.5627 $\bullet$& 0.5623 $\bullet$\\
\hline
\hline
                                & \multicolumn{6}{c}{Hamming Loss} \\
                                &  \multicolumn{3}{c}{bibtex} & \multicolumn{3}{c}{delicious} \\
  $\theta (k)/L $     & LRML    &   LEML     & CPLST   & LRML    &   LEML     & CPLST  \\
  \hline
$20\%$                   & 0.0129 &   0.0126 $\circ$   & 0.0127  $\circ$   & 0.0175 &   0.0181 $\bullet$& 0.0182 $\bullet$\\
$40\%$                   & 0.0119 &   0.0124 $\bullet$& 0.0126 $\bullet$& 0.0178 &   0.0181 $\bullet$& 0.0182 $\bullet$\\
$60\%$                   & 0.0118 &   0.0123 $\bullet$& 0.0125 $\bullet$& 0.0178 &   0.0182 $\bullet$& 0.0182 $\bullet$\\
$80\%$                   & 0.0118 &   0.0123 $\bullet$& 0.0125 $\bullet$& 0.0178 &   0.0182 $\bullet$& 0.0182 $\bullet$\\
\hline
\hline
                                & \multicolumn{6}{c}{Average AUC} \\
                                &  \multicolumn{3}{c}{bibtex} & \multicolumn{3}{c}{delicious} \\
  $\theta (k)/L $     & LRML    &   LEML     & CPLST   & LRML    &   LEML     & CPLST  \\
  \hline
$20\%$                   & 0.9023 &   0.8910 $\bullet$& 0.8657 $\bullet$& 0.8812 &   0.8854 $\circ$   & 0.8833  $\circ$\\
$40\%$                   & 0.9056 &   0.9015 $\bullet$& 0.8802 $\bullet$& 0.8897 &   0.8827 $\bullet$& 0.8814   $\bullet$\\
$60\%$                   & 0.9055 &   0.9040                 & 0.8854 $\bullet$& 0.8901 &   0.8814 $\bullet$& 0.8834 $\bullet$\\
$80\%$                   & 0.9049 &   0.9035 $\bullet$& 0.8882 $\bullet$& 0.8895 &   0.8814 $\bullet$& 0.8834 $\bullet$\\
\hline
\end{tabular}
\end{center}\vskip -0.25in
\end{table}

\subsection{Evaluations over Different Norms}

\begin{figure*}[!htbp]
\begin{center}
   \includegraphics[width=\textwidth]{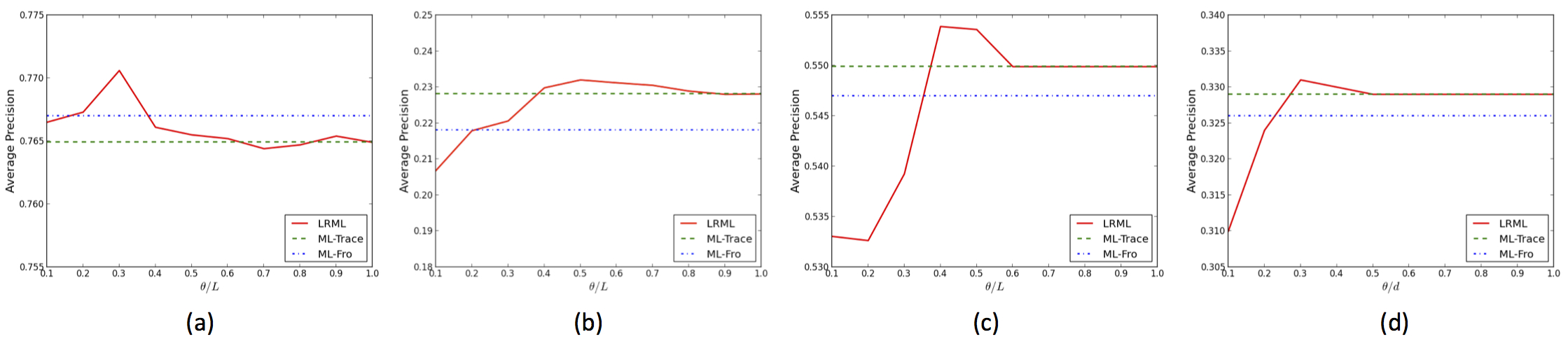}
\end{center}\vskip -0.2in
   \caption{Comparison of  ERM-based multi-learning algorithms with different norms on different datasets: (a) yeast, (b) corel5k, (c) bibtex and (d) delicious.}\vspace{-5mm}
\label{fig:com_norm}
\end{figure*}

We first compared the proposed LRML algorithm with the ML-trace and ML-Fro algorithms on the four datasets; the results are reported in Figure \ref{fig:com_norm}. For LRML, we varied $\theta$ to examine the influence of the number of constrained singular values. As shown in Figure \ref{fig:com_norm},  when $\theta$ is smaller, the performance of LRML is limited, because  the rank of the learned multi-label predictor is too low to cover all the information from different labels.  With  increasing $\theta$, LRML discovers the optimal multi-label predictor with appropriate rank and achieves stable performance. Compared to ML-trace, which constrains over all the singular values, the best LRML performance usually offers further improvements by tightly approximating the rank of the multi-label predictor.  Moreover, it is important to note that ML-Fro is of comparable performance to LRML and ML-trace on the yeast dataset. This is not surprising, since this dataset has a relatively small number of labels, and thus the influence of the low-rank structure is limited.

\subsection{Evaluations over Low-rank Structures}

We next compared LRML algorithm with the current state-of-the-art  LEML and CPLST algorithms. Since all three of these algorithms are implicitly or explicitly designed to exploit the low-rank structure, we accessed their performances using varying dimensionality reduction ratios.
The  results are summarized in Table \ref{tab:total}. 

LRML either improves on, or has comparable performance with, the other methods  for nearly all settings. Although these algorithms  all focus on the low-rank  structure of the predictor in multi-label learning, they study and discover it from different perspectives. LEML factorizes the trace norm by introducing two low-rank matrices, while CPLST learns the multi-label predictor by first learning an encoder to encode the multiple labels. Compared to proposed LRML algorithm, which directly constrains  the tail singular values of the predictor to obtain a low-rank structure, both LEML and CPLST increase the number of free parameters in pursuing the low rank, as a result, the optimization complexity will largely increases with large numbers of labels. In addition, since the objective function of LRML is an explicit minimization of the local Rademacher complexity, it will lead to a tight generalization error bound and guarantee stable performances for unseen examples.

In order to investigate the convergence behaviors of LRML, we plot the objective values of LRML on the yeast dataset in Figure \ref{fig:con}. We can observe that LRML converges fast in different cases. This confirms that the proposed conditional singular value thresholding algorithm can efficiently solve LRML.

\begin{figure}[!htbp]
\begin{center}
   \includegraphics[width=0.8\columnwidth]{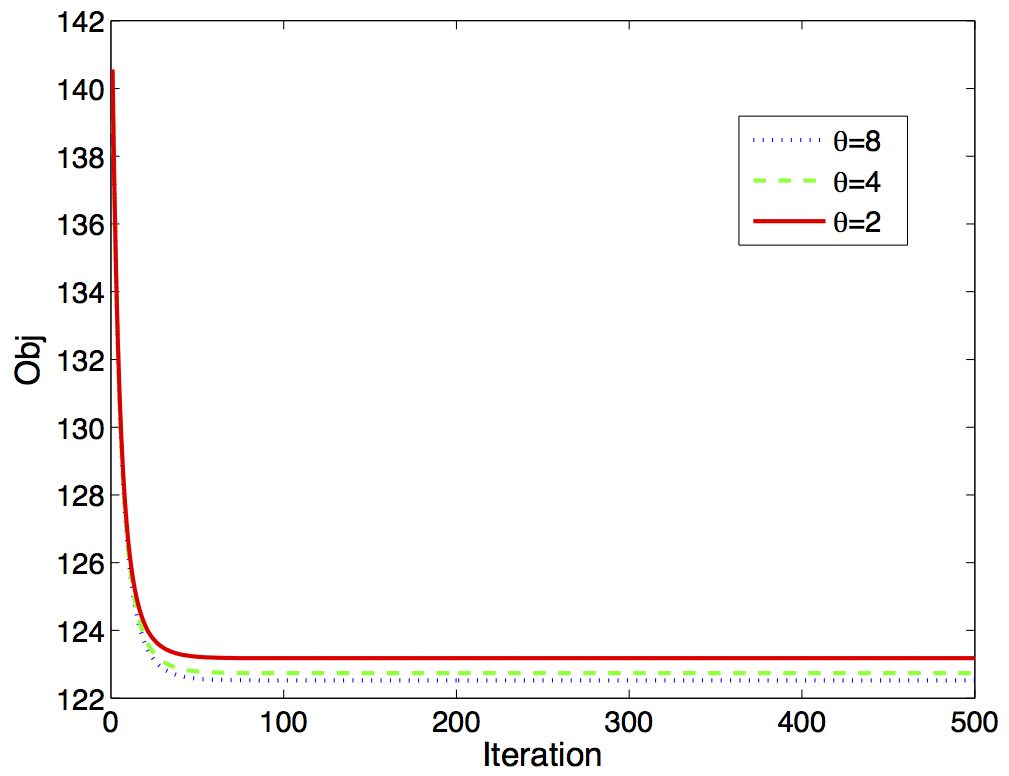}
\end{center}\vskip -0.2in
   \caption{The convergence curve of LRML on the yeast dataset.}\vspace{-5mm}
\label{fig:con}
\end{figure}

\section{Conclusion}

In this paper, we use the principle of local Rademacher complexity to guide the design of a new multi-label learning algorithm. We analyze the local Rademacher complexity of ERM-based multi-label learning algorithms, and discover that it is upper-bounded by the tail sum of the singular values of the multi-label predictor. Inspired by this local Radermacher complexity bound, a new multi-label learning algorithm is therefore proposed that  concentrates solely on the tail singular values of the predictor, rather than on all the singular values as with the trace norm.  
This use of the local Rademacher complexity results in a sharper generalization error bound and moreover, the new constraints over tail singular values provides a  tighter approximation of the low-rank structure than the trace norm. The experimental results  demonstrate the effectiveness of the proposed algorithm in discovering the low-rank structure and its  generalizability for multi-label prediction.

\vskip 0.2in
\bibliography{LRML_arxiv.bbl}

\end{document}